\DeclareMathOperator{\ReLU}{ReLU}
\DeclareMathOperator{\Dropout}{Dropout}
\DeclareMathOperator{\Bernoulli}{Bernoulli}
\DeclareMathOperator{\GAP}{GAP}
\DeclareMathOperator{\E}{E}
\DeclareMathOperator{\Var}{Var}
\DeclareMathOperator{\train}{train}
\DeclareMathOperator{\test}{test}
\newtheorem{definition}{Definition}
\newtheorem{proposition}{Proposition}
\newtheorem{guideline}{Guideline}
\newcommand{\ie}{\textit{i}.\textit{e}.}
\newcommand{\eg}{\textit{e}.\textit{g}.}
\title{How to Use Dropout Correctly on Residual Networks\\with Batch Normalization}
\author[1]{Bum Jun Kim}
\author[1]{Hyeyeon Choi}
\author[1]{Hyeonah Jang}
\author[1]{Donggeon Lee}
\author[1]{\href{mailto:<swkim@postech.edu>?Subject=Your UAI 2023 paper}{Sang Woo Kim}{}}
\affil[1]{%
    Department of Electrical Engineering.\\
    Pohang University of Science and Technology\\
    Pohang, South Korea
}
\begin{document}
\maketitle

\begin{abstract}
	For the stable optimization of deep neural networks, regularization methods such as dropout and batch normalization have been used in various tasks. Nevertheless, the correct position to apply dropout has rarely been discussed, and different positions have been employed depending on the practitioners. In this study, we investigate the correct position to apply dropout. We demonstrate that for a residual network with batch normalization, applying dropout at certain positions increases the performance, whereas applying dropout at other positions decreases the performance. Based on theoretical analysis, we provide the following guideline for the correct position to apply dropout: apply one dropout after the last batch normalization but before the last weight layer in the residual branch. We provide detailed theoretical explanations to support this claim and demonstrate them through module tests. In addition, we investigate the correct position of dropout in the head that produces the final prediction. Although the current consensus is to apply dropout after global average pooling, we prove that applying dropout before global average pooling leads to a more stable output. The proposed guidelines are validated through experiments using different datasets and models.
\end{abstract}

\section{Introduction}\label{sec:intro}
Deep neural networks have demonstrated remarkable performance across a range of fields including computer vision and natural language processing. Previously, training a deep neural network using a large number of parameters was known to be difficult owing to the overfitting problem. To address this issue, several regularizers such as dropout, batch normalization (BN), and label smoothing have recently been proposed \citep{DBLP:conf/icml/IoffeS15,DBLP:journals/jmlr/SrivastavaHKSS14,DBLP:conf/cvpr/SzegedyVISW16}. They have made significant contributions to the stable optimization of deep neural networks and have been widely used in various tasks.

For the architectural design of modern neural networks, the [BN--ReLU--Weight] pipeline has been widely used, where dropout can be added. However, there remains a lack of consensus regarding the correct position for applying dropout, and practitioners have chosen different positions. For example, \citet{DBLP:conf/aaai/PhamL21,DBLP:conf/cvpr/IsolaZZE17,DBLP:journals/tits/RomeraABA18,DBLP:conf/aaai/YanXL18} applied dropout \textbf{after every BN}; however, in the studies of \citet{DBLP:journals/corr/abs-1904-03392,DBLP:conf/nips/QiYSG17,DBLP:conf/cvpr/PavlloFGA19,DBLP:conf/cvpr/0028C0019,DBLP:conf/bmvc/ZagoruykoK16,DBLP:conf/cvpr/ZhanX0OL20}, dropout was applied \textbf{after each ReLU}. Furthermore, \citet{DBLP:journals/ijon/CastroSOP021,DBLP:conf/nips/GhiasiLL18} used dropout \textbf{after the weight layers} and \citet{DBLP:conf/iclr/RaviL17,DBLP:journals/cgf/LimGK16,DBLP:journals/corr/abs-2006-15619} applied dropout \textbf{after every MaxPool layer}. Based on these practices, we highlight the need for further research to determine the correct position to apply dropout.

In fact, \citet{DBLP:conf/eccv/HeZRS16} empirically found that applying dropout at the output of the residual block decreased the performance, whereas \citet{DBLP:conf/bmvc/ZagoruykoK16} reported that applying dropout inside the residual branch improved the performance. These observations highlight the importance of selecting the correct position for dropout. That is, choosing the incorrect order of layers implies a potential performance decrease, and if dropout is placed in the correct position, a potential performance improvement can be obtained at little extra cost.

Moreover, there is a lack of theoretical analysis to determine the correct position to apply dropout. As an exception, only the study by \citet{DBLP:conf/cvpr/0028C0019} theoretically discussed the position of applying dropout. Their analysis advocated using dropout before each weight layer to harmonize the dropout and BN. However, we present the limitations and a reinterpretation of their study. For example, starting from their analysis and considering residual networks, we derive a different conclusion regarding the correct position for the dropout.

In this study, we investigate the correct position of dropout. First, we analyze the different dropout operations in the training and test phases, which are harmful to the normalization step of BN. We quantify the different behaviors of dropout in the training and test phases as an inconsistency ratio and argue that the inconsistency ratio is influenced by the order of the layers. Considering this phenomenon, we discuss the best order of layers to mitigate the inconsistency ratio. Our conclusions suggest that dropout and ReLU are permutable (Proposition \ref{pro:relu}); using dropout before the weight layer resolves the inconsistency ratio under certain weight conditions (Proposition \ref{pro:postpre}); and residual blocks mitigate the inconsistency ratio better than non-residual blocks for certain positions (Propositions \ref{pro:zeroth} and \ref{pro:lth}). Based on these analyses, we propose applying one dropout after the last BN but before the last weight layer in the residual branch (Guideline \ref{gui:my}).

In addition, we analyze the use of dropout in the head that outputs the final prediction. Although the current consensus is to apply dropout after global average pooling (GAP), we prove that using dropout before GAP leads to a more stable output (Proposition \ref{pro:head}). Based on this analysis, we propose the use of dropout before GAP (Guideline \ref{gui:head}).

The validities of Guidelines \ref{gui:my} and \ref{gui:head} are verified through experiments on different datasets including CIFAR-$\{10,\ 100\}$, Caltech-101, Oxford IIIT-Pet, and ImageNet. We observed that the performance of the model improved with dropout when the guidelines were followed.

\section{Theoretical Analysis}\label{sec:theory}
\paragraph{Notation} In this paper, we use the notations $\E[x_i]$ and $\Var[x_i]$ to denote the mean and variance of the $i$th element $x_i$ of vector $\mathbf{x}$ over the mini-batch. To represent $\Var[x_i]$ for an arbitrary index $i$, we use the abbreviation $\Var[\mathbf{x}]$.
\subsection{Background}
Dropout is an operation that randomly drops certain features in the target layer during training \citep{DBLP:journals/jmlr/SrivastavaHKSS14}. First, we define the Dropout operation as follows.
\begin{definition}
	Operation \textit{Dropout} with a keep probability $p \in (0,\ 1)$ is defined as follows:
	\begin{align}
		\Dropout_{\train}(\mathbf{x}) & \coloneqq \frac{1}{p} \mathbf{M} \mathbf{x}, \\
		\Dropout_{\test}(\mathbf{x})  & \coloneqq  \mathbf{x},
	\end{align}
	where $\mathbf{x}$ is an $n$-dimensional vector and $\mathbf{M}$ is an $n \times n$ diagonal matrix with $m_{ij} = 0$ for $i \neq j$ and $m_{ij} \sim \Bernoulli(p)$ for $i=j$.
\end{definition}
According to the Bernoulli distribution, $m_{i,i}$ is either one with keep probability $p$ or zero with drop probability $1-p$, and is independent of $\mathbf{x}$. Thus, we have $\E[m_{i,i}^2]=\E[m_{i,i}]=p$ and $\E[m_{i,i}m_{j,j}]=p^2$ for $i \neq j$. This property ensures mean consistency in the training and test phases, \ie, $\E[\frac{1}{p} \mathbf{M} \mathbf{x}] = \E[\mathbf{x}]$.

However, dropout does not provide variance consistency in the training and test phases. To investigate this phenomenon, we introduce the following inconsistency ratio:
\begin{definition}
	Let $f(\mathbf{x})$ be the output feature of an operation $f$. The \textit{inconsistency ratio} $\Delta(f(\mathbf{x}))$ is defined as the ratio of the variance of $f(\mathbf{x})$ between the training and test phases.
	\begin{align}
		\Delta(f(\mathbf{x})) \coloneqq \frac{\Var[f_{\test}(\mathbf{x})]}{\Var[f_{\train}(\mathbf{x})]}.
	\end{align}
\end{definition}
For example, $\Delta(f(\mathbf{x}))=0.5$ indicates that the variance of $f(\mathbf{x})$ during the training phase is twice as large as that during the test phase. To obtain variance consistency, we should achieve $\Delta(f(\mathbf{x}))=1$.

\citet{DBLP:conf/cvpr/0028C0019} state that the use of dropout yields variance inconsistency in a neural network. During the training phase,
\begin{align}
	 & \Var[\Dropout_{\train}(\mathbf{x})]                                       \\
	 & = \E[ \frac{1}{p^2} m_{i,i}^2 x_i^2 ] - (\E[ \frac{1}{p} m_{i,i} x_i ])^2 \\
	 & = \frac{1}{p} \Var[x_i] + \frac{1-p}{p} (\E[x_i])^2,
\end{align}
which is greater than $\Var[\Dropout_{\test}(\mathbf{x})]=\Var[x_i]$ for $p<1$. Thus, $\Delta(\Dropout(\mathbf{x}))<1$.

The variance inconsistency of dropout causes a problem when we use dropout with BN. Although subsequent BN anticipates receiving the same mean and variance during the training and test phases, the variance inconsistency of dropout provides different variances to BN during the training and test phases. For example, consider an input feature $\mathbf{h}$ to BN where $\Var[\mathbf{h}_{\train}]=10$, $\Var[\mathbf{h}_{\test}]=2$, $\E[\mathbf{h}_{\train}]=0$, and $\E[\mathbf{h}_{\test}]=0$. The normalization step of BN uses the mean and variance of the training phase to produce $\frac{\mathbf{h}}{\sqrt{10}}$, which is also used in the test phase because BN assumes the same mean and variance. After the normalization step, we obtain $\Var[\frac{\mathbf{h}_{\train}}{\sqrt{10}}]=1$ during the training phase. However, during the test phase, the BN receives a feature with a different variance, resulting in $\Var[\frac{\mathbf{h}_{\test}}{\sqrt{10}}]=0.2$. Thus, the variance inconsistency breaks the consistent behavior of the subsequent BN during the training and test phases. This phenomenon explains the decrease in performance when dropout and BN are used simultaneously.

\subsection{Order of Operations}
A modern neural network is composed of numerous operations such as ReLU, weight layer, BN, and skip connection, whose output feature map is a potential position for applying dropout. Here, we claim that the position of applying dropout influences the inconsistency ratio. The goal of this study is to investigate the best position for applying dropout that offers a $\Delta(f(\mathbf{x}))$ close to one, which harmonizes the dropout with BN. First, we discuss the order of the operations.

\paragraph{Order of Dropout and ReLU} Some practitioners have applied dropout before ReLU, whereas others have applied dropout after ReLU (Section \ref{sec:intro}). Here, we claim that the influence of the order of ReLU and dropout is insignificant.

\begin{proposition} \label{pro:relu}
	ReLU and dropout operations are permutable:
	\begin{align}
		\ReLU(\Dropout(\mathbf{x})) = \Dropout(\ReLU(\mathbf{x})). \label{eq:commute}
	\end{align}
\end{proposition}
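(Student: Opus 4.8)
The plan is to verify the identity \eqref{eq:commute} componentwise, using the fact that $\Dropout$ acts through a diagonal Bernoulli mask $\mathbf{M}$ and $\ReLU$ acts elementwise as $\ReLU(t) = \max(t,0)$. Since the test-phase versions of both operations are trivial ($\Dropout_{\test}$ is the identity and $\ReLU$ is unchanged), the only case requiring work is the training phase, so I would fix a sample of the mask $\mathbf{M}$ and argue the equality holds for that realization; since it holds for every realization, it holds as an equality of random variables.

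First I would write out the $i$th component of each side. On the left, $\bigl[\ReLU(\Dropout_{\train}(\mathbf{x}))\bigr]_i = \max\!\bigl(\tfrac{1}{p} m_{i,i} x_i,\, 0\bigr)$; on the right, $\bigl[\Dropout_{\train}(\ReLU(\mathbf{x}))\bigr]_i = \tfrac{1}{p} m_{i,i} \max(x_i, 0)$. The key observation is that the scalar $\tfrac{1}{p} m_{i,i}$ is nonnegative (it equals $1/p > 0$ when $m_{i,i}=1$ and equals $0$ when $m_{i,i}=0$), and multiplication by a nonnegative scalar $c \ge 0$ commutes with $\max(\cdot, 0)$, i.e. $\max(c t, 0) = c \max(t, 0)$. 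I would dispatch this sub-fact by the two-case split on the sign of $t$ (or note it follows from positive homogeneity of $\ReLU$), handling $c = 0$ as the degenerate case where both sides are $0$. Applying this with $c = \tfrac{1}{p} m_{i,i}$ and $t = x_i$ gives equality of the $i$th components, and since $i$ was arbitrary the two vectors coincide.

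The argument is essentially a one-line appeal to the positive homogeneity of $\ReLU$, so there is no real obstacle; the only thing to be careful about is making explicit that the Bernoulli mask takes values in $\{0, 1/p\}$ after the $1/p$ rescaling, both of which are nonnegative, so that the homogeneity applies for every realization of $\mathbf{M}$ — it is this uniformity over the randomness that upgrades the scalar identity to the claimed equality of the stochastic operations. I would close by remarking that the same reasoning shows dropout commutes with any positively homogeneous activation, which is why the paper can treat the $\Dropout$–$\ReLU$ ordering as immaterial and focus on the genuinely consequential orderings involving the weight layer and BN.
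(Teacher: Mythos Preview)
Your proposal is correct and follows essentially the same route as the paper: handle the test phase trivially, then fix a realization of the Bernoulli mask $\mathbf{M}$, work componentwise, and invoke the positive homogeneity $\ReLU(kx)=k\,\ReLU(x)$ for the nonnegative scalar $k=m_{i,i}/p$. The only differences are cosmetic---you spell out the $c=0$ case and the closing remark about general positively homogeneous activations---so there is nothing to add.
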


\begin{proof}
	First, during the test phase, dropout operates as an identity function that satisfies Eq. \ref{eq:commute}. Secondly, we claim that, even during the training phase, the influence of the order of ReLU and dropout is insignificant. Consider that we sampled matrix $\mathbf{M}$ to denote $\Dropout_{\train}^\mathbf{M}(\mathbf{x})$. Because matrix $\mathbf{M}$ is a diagonal matrix, the $i$th element of vector $\ReLU(\Dropout_{\train}^\mathbf{M}(\mathbf{x}))$ can be written as
	\begin{align}
		[\ReLU(\Dropout_{\train}^\mathbf{M}(\mathbf{x}))]_i = \ReLU \Bigl( \frac{1}{p} m_{i,i} x_i \Bigr).
	\end{align}
	Here, the coefficient $m_{i,i}/p$ is a non-negative scalar. For $\ReLU(x)=\max(0,\ x)$, we know that $\ReLU(kx)=k\ReLU(x)$ for a non-negative scalar $k$. Thus, we obtain
	\begin{align}
		\ReLU \Bigl( \frac{1}{p} m_{i,i} x_i \Bigr) & = \frac{1}{p} m_{i,i} \ReLU(x_i)                       \\
		                                            & = [\Dropout_{\train}^\mathbf{M}(\ReLU(\mathbf{x}))]_i.
	\end{align}
	Therefore, we conclude that $\ReLU(\Dropout_{\train}^\mathbf{M}(\mathbf{x})) = \Dropout_{\train}^\mathbf{M}(\ReLU(\mathbf{x}))$.
\end{proof}
In summary, the order in which dropout and ReLU operations are applied to vector $\mathbf{x}$ does not influence the result. In the remainder of this paper, we do not consider applying dropout before ReLU unless specified otherwise.

However, commutativity with dropout does not hold for other operations such as weight layer and BN. We further investigate the effects of the order of these operations.

\paragraph{Order of Dropout and Weight}
We refer to dropout before the weight layer as \textit{PreDropout}. For PreDropout, we have $\mathbf{W}\Dropout_{\train}(\mathbf{x}) = \frac{1}{p} \mathbf{W} \mathbf{M} \mathbf{x}$ and can interpret the two operations using another weight $\mathbf{W} \mathbf{M}/p$. Similarly, for the PostDropout order, we write $\Dropout_{\train}(\mathbf{W}\mathbf{x}) = \frac{1}{p} \mathbf{M} \mathbf{W} \mathbf{x}$. The difference between PreDropout and PostDropout occurs because $\mathbf{W}\mathbf{M} \neq \mathbf{M}\mathbf{W}$ for $p < 1$. These matrices can be represented as
\begin{align*}
	\mathbf{W}\mathbf{M} & = \begin{bmatrix}
		                         \vert      & \vert  & \vert      \\
		                         m_{1,1}w_1 & \cdots & m_{n,n}w_n \\
		                         \vert      & \vert  & \vert
	                         \end{bmatrix},     \\
	\mathbf{M}\mathbf{W} & =\begin{bmatrix}
		                        \text{---} & m_{1,1} w_1 & \text{---} \\
		                        \text{---} & \cdots      & \text{---} \\
		                        \text{---} & m_{m,m} w_m & \text{---}
	                        \end{bmatrix}.
\end{align*}
The diagonal element $m_{i,i}$ is either zero or one. Thus, PreDropout is equivalent to dropping \textit{columns} in the weight matrix $\mathbf{W}$ with $1/p$ constant scaling, whereas PostDropout is equivalent to dropping \textit{rows} in the weight matrix $\mathbf{W}$ with $1/p$ constant scaling. Thus, the characteristics of PreDropout and PostDropout differ for $p<1$.

The question then arises as to which is more effective in reducing variance inconsistency. \citet{DBLP:conf/cvpr/0028C0019} suggest that for the PreDropout order, increasing the width alleviates variance inconsistency, assuming a certain condition on weight. However, we find that increasing the width does not solve the variance inconsistency for other weight conditions such as He initialization \citep{DBLP:conf/iccv/HeZRS15}. Although \citet{DBLP:conf/cvpr/0028C0019} emphasized increasing the width, we focus more on the weight condition. Our reinterpretation of their study is as follows.
\begin{proposition} \label{pro:postpre}
	PreDropout exhibits less variance inconsistency than PostDropout
	\begin{align}
		\Delta(\underbrace{\Dropout(\mathbf{W}\mathbf{x})}_{\text{PostDropout}}) < \Delta(\underbrace{\mathbf{W}\Dropout(\mathbf{x})}_{\text{PreDropout}}) < 1,
	\end{align}
	where the first inequality holds if and only if $\sum_{j=1}^n \sum_{k \neq j}^n w_{i,j}w_{i,k} \E[x_j x_k ] > 0$.
\end{proposition}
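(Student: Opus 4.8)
The plan is to reduce both inconsistency ratios to a common form $V/(V+cA)$, where $V=\Var[y_i]$ for $y_i \coloneqq [\mathbf{W}\mathbf{x}]_i$, where $c=(1-p)/p>0$, and where $A$ is a nonnegative quantity that differs between the two orderings, and then to exploit the fact that $A\mapsto V/(V+cA)$ is strictly decreasing.

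First I would dispose of PostDropout by observing that $\Dropout(\mathbf{W}\mathbf{x})$ is literally dropout applied to the vector $\mathbf{y}=\mathbf{W}\mathbf{x}$, so the training-phase variance formula already derived in the Background applies verbatim: $\Var[\Dropout_{\train}(\mathbf{W}\mathbf{x})]_i = \frac{1}{p}\Var[y_i]+\frac{1-p}{p}(\E[y_i])^2 = \Var[y_i]+\frac{1-p}{p}\E[y_i^2]$. Since $\Var[\Dropout_{\test}(\mathbf{W}\mathbf{x})]_i=\Var[y_i]$, this gives $\Delta(\Dropout(\mathbf{W}\mathbf{x})) = \Var[y_i]/(\Var[y_i]+\frac{1-p}{p}\E[y_i^2])$.

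Next I would handle PreDropout by direct expansion. Writing $[\mathbf{W}\Dropout_{\train}(\mathbf{x})]_i=\frac{1}{p}\sum_j w_{i,j}m_{j,j}x_j$ and using independence of $\mathbf{M}$ from $\mathbf{x}$ together with $\E[m_{j,j}^2]=p$ and $\E[m_{j,j}m_{k,k}]=p^2$ for $j\neq k$, I would split the resulting double sum into its diagonal ($j=k$) and off-diagonal ($j\neq k$) parts. After subtracting $(\E[y_i])^2=(\E[\mathbf{W}\Dropout_{\train}(\mathbf{x})]_i)^2$, the off-diagonal terms collapse to $\sum_j\sum_{k\neq j}w_{i,j}w_{i,k}\,\mathrm{Cov}[x_j,x_k]$, exactly the cross terms that also appear in $\Var[y_i]$, while the diagonal part leaves an extra $\frac{1-p}{p}\sum_j w_{i,j}^2\E[x_j^2]$. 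The upshot is $\Var[\mathbf{W}\Dropout_{\train}(\mathbf{x})]_i = \Var[y_i]+\frac{1-p}{p}\sum_j w_{i,j}^2\E[x_j^2]$, hence $\Delta(\mathbf{W}\Dropout(\mathbf{x})) = \Var[y_i]/(\Var[y_i]+\frac{1-p}{p}\sum_j w_{i,j}^2\E[x_j^2])$.

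Finally I would compare. Both ratios equal $V/(V+cA)$ with $V=\Var[y_i]$, $c=(1-p)/p>0$, and $A$ equal to $\E[y_i^2]$ for PostDropout or $\sum_j w_{i,j}^2\E[x_j^2]$ for PreDropout; in the nondegenerate case both values of $A$ are strictly positive, which already yields that both ratios are $<1$ (in particular the second inequality of the claim). Since $A\mapsto V/(V+cA)$ is strictly decreasing, $\Delta(\Dropout(\mathbf{W}\mathbf{x}))<\Delta(\mathbf{W}\Dropout(\mathbf{x}))$ holds if and only if $\E[y_i^2]>\sum_j w_{i,j}^2\E[x_j^2]$; expanding $\E[y_i^2]=\sum_{j}\sum_{k}w_{i,j}w_{i,k}\E[x_j x_k]$ and cancelling the diagonal shows this difference is precisely $\sum_{j=1}^n\sum_{k\neq j}^n w_{i,j}w_{i,k}\E[x_j x_k]$, which is the stated condition. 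The main obstacle is the bookkeeping in the PreDropout expansion: one must keep the Bernoulli second moments, the independence from $\mathbf{x}$, and the diagonal/off-diagonal split straight so that the cross terms recombine cleanly into $\Var[y_i]$; once that identity is established, the remainder is a one-line monotonicity argument.
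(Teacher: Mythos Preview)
Your proof is correct and follows the natural route: compute the training-phase variance for each ordering via the diagonal/off-diagonal split of the Bernoulli second moments, rewrite both inconsistency ratios as $V/(V+cA)$, and reduce the comparison to $\E[y_i^2]-\sum_j w_{i,j}^2\E[x_j^2]=\sum_{j}\sum_{k\neq j}w_{i,j}w_{i,k}\E[x_jx_k]$. The paper defers its proof to the Appendix, but this is exactly the computation one expects (and the one underlying the referenced analysis of \citet{DBLP:conf/cvpr/0028C0019}); your treatment matches it, including the implicit nondegeneracy needed for the strict inequality $\Delta(\mathbf{W}\Dropout(\mathbf{x}))<1$.
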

According to Proposition \ref{pro:postpre}, the advantage of PreDropout depends on the weight condition. For example, if the weight has a nonzero mean and $\mathbf{x}$ comes from ReLU output, the condition holds and PreDropout is advantageous. However, for zero-mean weight, the inconsistency ratios of PostDropout and PreDropout can be indistinguishable. A detailed proof can be found in the Appendix.

\begin{figure}[t!]
	\centering
	\includegraphics[width=0.89\linewidth]{./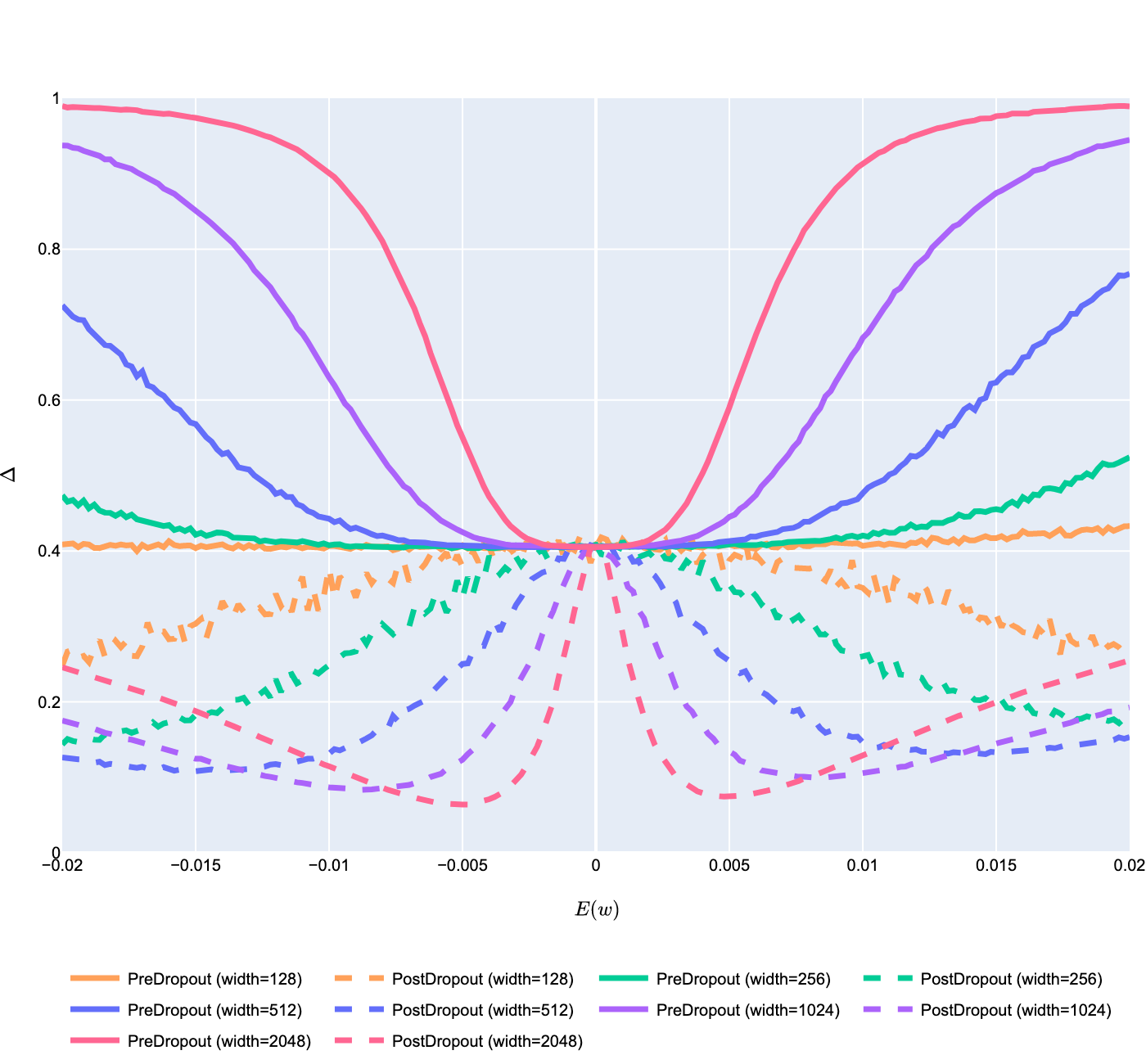}
	\caption{Empirical validation on Proposition \ref{pro:postpre}. We observed $\Delta(\Dropout(\mathbf{W}\mathbf{x})) < \Delta(\mathbf{W}\Dropout(\mathbf{x})) < 1$ for $\E[\mathbf{W}] \neq 0$.}
	\label{fig:pro2}
\end{figure}

\paragraph{Empirical Observation} We measured the two inconsistency ratios, $\Delta(\Dropout(\mathbf{W}\mathbf{x}))$ and $\Delta(\mathbf{W}\Dropout(\mathbf{x}))$. The [BN--ReLU--Weight--BN--ReLU] pipeline, which is commonly deployed in the residual branch, produced output $\mathbf{x}$. The input to the pipeline was sampled from $\mathcal{N}(0,\ 1)$ with a mini-batch size of $10^5$. We tested five cases of width $n$ from $\{128,\ 256,\ 512,\ 1024,\ 2048\}$. We used a keep probability of 0.5 for dropout. For the weight condition, we set $\Var[\mathbf{W}]=2/n$, similar to the He initialization but varied $\E[\mathbf{W}]$.

The results are summarized in Figure \ref{fig:pro2}. We observed that $\Delta(\Dropout(\mathbf{W}\mathbf{x})) < \Delta(\mathbf{W}\Dropout(\mathbf{x})) < 1$ for $\E[\mathbf{W}] \neq 0$. The advantage of using PreDropout is visible in both $\E[\mathbf{W}] > 0$ and $\E[\mathbf{W}] < 0$. This is because $\E[x_j x_k] \geq 0$ for $\mathbf{x}$ from the ReLU output and $\sum_{j=1}^n \sum_{k \neq j}^n w_{i,j}w_{i,k} \approx n(n-1)(\E[\mathbf{W}])^2 > 0$ for large $\left\lvert \E[\mathbf{W}] \right\rvert$. A large width has little effect if $\E[\mathbf{W}]=0$. Thus, we found that the advantage of PreDropout requires $\E[\mathbf{W}] \neq 0$ and intensifies as the width increases.

Therefore, the advantage of the PreDropout is dependent on the weight condition. For example, if the weight is polarized to a large $\left\lvert \E[\mathbf{W}] \right\rvert$ through training, the accumulation of its products becomes positive, allowing us to enjoy the advantage of PreDropout. \citet{DBLP:conf/cvpr/0028C0019} empirically observed that the trained weight satisfies a certain condition to advocate PreDropout. We conjecture that the condition holds and validate the superior performance of PreDropout over PostDropout through experiments (Section \ref{sec:exp}).

However, we later demonstrate cases where neither PreDropout nor PostDropout improves performance. Rather than comparing PreDropout and PostDropout, we find that the properties of residual networks have a greater influence on the alleviation of variance inconsistency.

\subsection{Dropout in Residual Block}
A residual network is composed of residual blocks, which consist of a residual and skip branch. PreResNet, also known as ResNetV2, is a variant that applies BN first in the residual branch \citep{DBLP:conf/eccv/HeZRS16}. In this section, we provide an analysis of PreResNet, which can be extended to other variants of residual networks such as ResNetV1 \citep{DBLP:conf/cvpr/HeZRS16}. We examine eight possible positions to apply dropout, labeled P0--P7 (Figure \ref{fig:residualblock}).

First, we consider applying dropout at P1. As mentioned earlier, the use of dropout causes inconsistency in the input variance of the next BN. If we apply dropout at P1, it directly influences the input variance of the first BN of the $l$th residual block. Similarly, applying dropout at one of (P2, P3, or P4) results in an inconsistency in the input variance of the second BN of the $l$th residual block. Furthermore, applying dropout at P0 causes variance inconsistency in the first BN of the $l$th residual block.

\begin{figure}[t!]
	\centering
	\includegraphics[width=0.81\linewidth]{./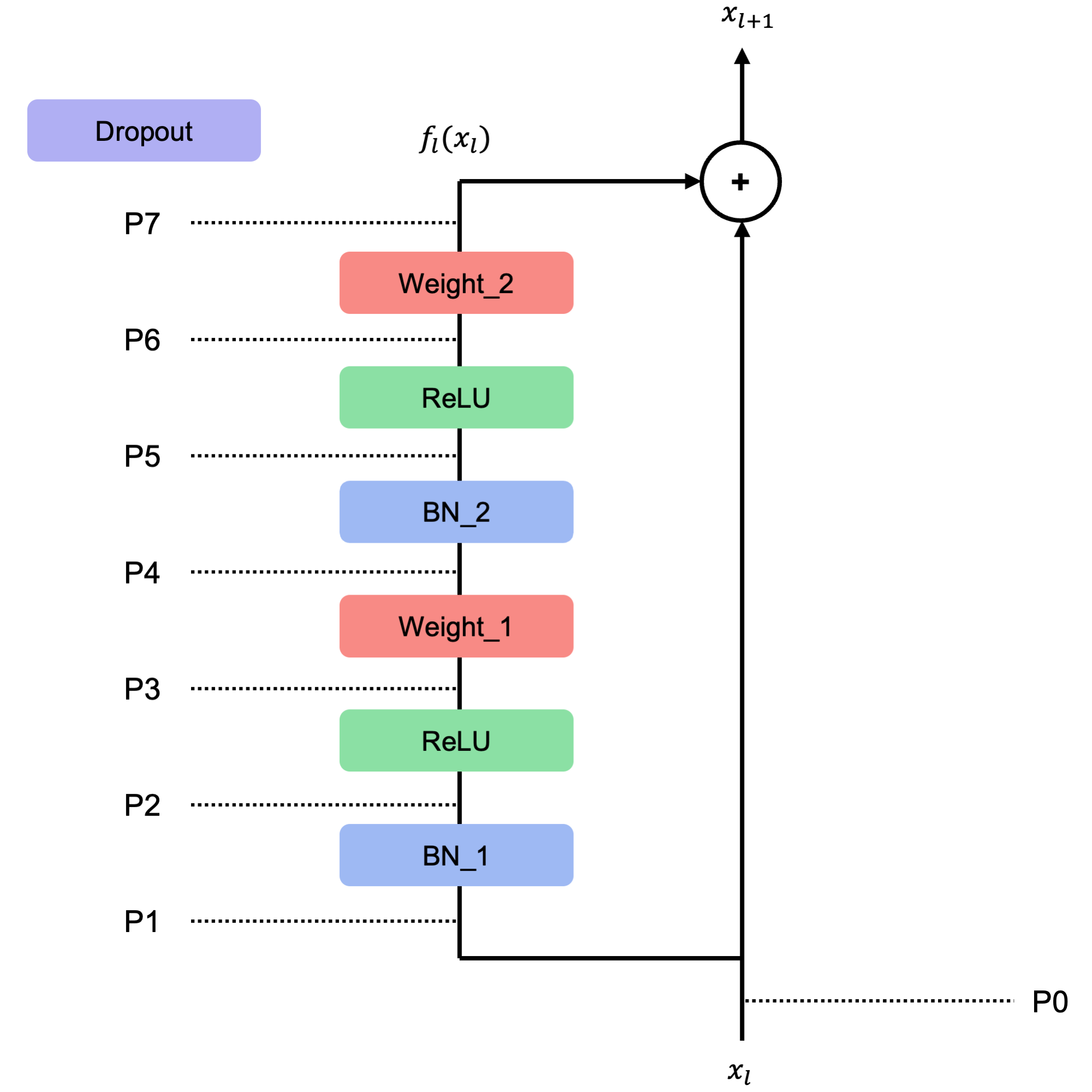}
	\caption{Residual block of PreResNet.}
	\label{fig:residualblock}
\end{figure}

However, applying dropout at one of (P5, P6, or P7) has distinct characteristics. The use of dropout at one of (P5, P6, or P7) causes inconsistency in the input variance of the BN at the next residual block, \ie, the first BN of the ($l+1$)th residual block. Because the output of the residual branch is merged with the skip connection, the variance inconsistency in the next residual block behaves differently compared to the other positions. From this observation, we investigate the inconsistency ratio of dropout at (P5, P6, or P7) in detail.

First, we analyze the output $f_l(\mathbf{x}_l)$ of the [BN--ReLU--Weight--BN--ReLU--Dropout--Weight] pipeline from the input feature map $\mathbf{x}_l$ of $l$th block, considering dropout at P6. Note that BN outputs $\gamma \hat{x} + \beta$ from the normalized feature $\hat{x}$. For the $l$th residual block, let the output of the second BN be $\mathbf{z}_l \sim \mathcal{N}(0,\ \gamma_l^2)$.\footnote{At initialization, BN has $\gamma=1$ and $\beta=0$. During training, although $\gamma$ becomes a specific value, $\beta$ stays close to zero. We conjecture that $\beta$ near zero is advantageous to preserving zero-centered ReLU input. From this observation, we allow a degree of freedom in $\gamma$; however, we use $\beta=0$.} This feature map passes through ReLU, weight, and skip connection. First, we know
\begin{align}
	\E[\ReLU(\mathbf{z}_l)]   & =\frac{1}{\sqrt{2 \pi}}\gamma_l,   \\
	\Var[\ReLU(\mathbf{z}_l)] & = \frac{\pi -1 }{2 \pi}\gamma_l^2.
\end{align}
See the Appendix for details of the above equations. Now, we apply dropout. During the training phase,
\begin{align}
	\E[\Dropout_{\train}(\ReLU(\mathbf{z}_l))]   & =\frac{1}{\sqrt{2 \pi}}\gamma_l,      \\
	\Var[\Dropout_{\train}(\ReLU(\mathbf{z}_l))] & = \frac{\pi/p - 1}{2 \pi} \gamma_l^2.
\end{align}
Finally, we apply weight that is initialized from He initialization $\mathcal{N}(0,\ 2/n)$. Then, we have
\begin{align}
	\E[\mathbf{W}\Dropout_{\train}(\ReLU(\mathbf{z}_l))]   & = 0,                                \\
	\Var[\mathbf{W}\Dropout_{\train}(\ReLU(\mathbf{z}_l))] & = \frac{\pi/p - 1}{\pi} \gamma_l^2,
\end{align}
which represents the variance of the residual branch $f_l(\mathbf{x}_l)$:
\begin{align}
	\Var[f_{l,\train}(\mathbf{x}_l)] & = \frac{\pi/p - 1}{\pi} \gamma_l^2, \\
	\Var[f_{l,\test}(\mathbf{x}_l)]  & = \frac{\pi - 1}{\pi} \gamma_l^2.
\end{align}
The same result can be obtained when applying dropout at P5 (Proposition \ref{pro:relu}) and P7 (Proposition \ref{pro:postpre} for the zero-mean weight). Now, consider two choices for the building blocks: non-residual block $f_l(\mathbf{x}_l)$ and residual block $\mathbf{x}_l + f_l(\mathbf{x}_l)$. We begin by investigating the case where $l=0$.

\begin{proposition} \label{pro:zeroth}
	For the $0$th very first block, choosing a residual block alleviates variance inconsistency from dropout when compared with a non-residual block
	\begin{align}
		\Delta(\underbrace{f_0(\mathbf{x}_0)}_{\text{Non-residual}}) < \Delta(\underbrace{\mathbf{x}_0 + f_0(\mathbf{x}_0)}_{\text{Residual}}) < 1,
	\end{align}
	if we apply dropout at one of (P5, P6, or P7) in PreResNet.
\end{proposition}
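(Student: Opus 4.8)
The plan is to write both inconsistency ratios in closed form and then reduce the statement to an elementary monotonicity property of fractions. From the computation preceding the statement, placing dropout at P6 (and, by Propositions \ref{pro:relu} and \ref{pro:postpre}, equivalently at P5 or P7) gives $\Var[f_{0,\train}(\mathbf{x}_0)] = \tfrac{\pi/p-1}{\pi}\gamma_0^2$ and $\Var[f_{0,\test}(\mathbf{x}_0)] = \tfrac{\pi-1}{\pi}\gamma_0^2$, where $\gamma_0$ is the scale of the second BN in the $0$th block. Hence
\begin{align}
	\Delta(f_0(\mathbf{x}_0)) = \frac{\Var[f_{0,\test}(\mathbf{x}_0)]}{\Var[f_{0,\train}(\mathbf{x}_0)]} = \frac{\pi-1}{\pi/p-1} < 1,
\end{align}
the inequality holding because $p<1$ makes $\pi/p-1 > \pi-1 > 0$. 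So the remaining work is to compute $\Delta(\mathbf{x}_0 + f_0(\mathbf{x}_0))$ and compare it with this value.

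Second, I would handle the skip term. Dropout at P5, P6, or P7 sits inside the residual branch and never touches the skip path, so $\mathbf{x}_0$ has identical statistics in the two phases; set $v \coloneqq \Var[\mathbf{x}_{0}] > 0$, the variance produced by the stem, which is therefore phase-independent. Because the residual branch ends in a weight layer whose entries are zero-mean He-initialized and independent of everything upstream, we have $\E[f_0(\mathbf{x}_0)] = 0$ and $\mathrm{Cov}[\mathbf{x}_0, f_0(\mathbf{x}_0)] = 0$, so the variances add:
\begin{align}
	\Var\bigl[\mathbf{x}_{0} + f_{0,\train}(\mathbf{x}_0)\bigr] & = v + \tfrac{\pi/p-1}{\pi}\gamma_0^2, \\
	\Var\bigl[\mathbf{x}_{0} + f_{0,\test}(\mathbf{x}_0)\bigr]  & = v + \tfrac{\pi-1}{\pi}\gamma_0^2,
\end{align}
and consequently $\Delta(\mathbf{x}_0 + f_0(\mathbf{x}_0)) = \bigl(v + \tfrac{\pi-1}{\pi}\gamma_0^2\bigr)\big/\bigl(v + \tfrac{\pi/p-1}{\pi}\gamma_0^2\bigr)$.

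Third, I would finish with the observation that adding the same positive quantity to numerator and denominator of a proper fraction moves it toward $1$: if $0 < a < b$ and $v > 0$ then $\tfrac{a}{b} < \tfrac{a+v}{b+v} < 1$, where the left inequality is just $av < bv$ after clearing denominators and the right one is immediate. Taking $a = \tfrac{\pi-1}{\pi}\gamma_0^2$ and $b = \tfrac{\pi/p-1}{\pi}\gamma_0^2$ (so $a<b$ precisely because $p<1$) together with the $v$ above yields $\Delta(f_0(\mathbf{x}_0)) < \Delta(\mathbf{x}_0 + f_0(\mathbf{x}_0)) < 1$, which is the claim; conceptually, the clean (consistent) variance $v$ that the skip connection injects dilutes the inconsistent variance of the residual branch.

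The ratio computations and the fraction inequality are routine; the step that genuinely needs justification is the one in the second paragraph, namely that $v=\Var[\mathbf{x}_0]$ is strictly positive and identical in training and testing (which holds because dropout at P5/P6/P7 leaves the skip branch intact), and that the cross-covariance between $\mathbf{x}_0$ and the residual-branch output may be discarded. I would argue the latter exactly as the rest of the paper argues about weight layers: the final weight of the residual branch is zero-mean and independent of the incoming activations, so the covariance vanishes in expectation over the initialization. This is the place where the residual structure does its work, and where a careful reader will want the modeling assumptions spelled out.
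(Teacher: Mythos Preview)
Your proof is correct and follows essentially the same route as the paper: compute the train/test variances of the residual branch, use that $\Var[\mathbf{x}_0]$ is phase-independent for the $0$th block so that the variances add, and then apply the elementary inequality $\tfrac{a}{b}<\tfrac{a+v}{b+v}<1$ for $0<a<b$, $v>0$. If anything, you are more explicit than the paper about why the cross-covariance vanishes and why $\Var[\mathbf{x}_0]$ is the same in both phases, which strengthens rather than alters the argument.
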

\begin{proof}
	Note that for $x>0$, $y>0$, and $c>0$, if $\frac{x}{y}<1$, then $\frac{x}{y}<\frac{x+c}{y+c}$. Using this inequality, we obtain
	\begin{align}
		\Delta(f_0(\mathbf{x}_0)) & = \frac{\Var[f_{0,\test}(\mathbf{x}_0)]}{\Var[f_{0,\train}(\mathbf{x}_0)]}                                           \\
		                          & < \frac{\Var[\mathbf{x}_0] + \Var[f_{0,\test}(\mathbf{x}_0)]}{\Var[\mathbf{x}_0] + \Var[f_{0,\train}(\mathbf{x}_0)]} \\
		                          & = \Delta(\mathbf{x}_0 + f_0(\mathbf{x}_0)) < 1.
	\end{align}
	The advantage of choosing a residual block appears when the skip connection is located after dropout but before the subsequent BN. Thus, applying dropout at one of (P5, P6, or P7) alleviates variance inconsistency. Others, such as (P2, P3, or P4) correspond to non-residual blocks and do not alleviate variance inconsistency.
\end{proof}

The above derivation exploits the fact that, for the very first block, the input feature map $\mathbf{x}_0$ exhibits no variance inconsistency. However, when we choose a residual block, subsequent blocks receive the input feature map $\mathbf{x}_l$, which exhibits variance inconsistency due to dropout. Nonetheless, even in this scenario, choosing a residual block is still advantageous for reducing variance inconsistency.
\begin{proposition} \label{pro:lth}
	For the $l$th block, if all $l^\prime$th blocks for $l^\prime < l$ are residual blocks, then choosing a residual block alleviates variance inconsistency from dropout when compared to a non-residual block
	\begin{align}
		\Delta(\underbrace{f_l(\mathbf{x}_l)}_{\text{Non-residual}}) < \Delta(\underbrace{\mathbf{x}_l + f_l(\mathbf{x}_l)}_{\text{Residual}}) < 1,
	\end{align}
	if we apply dropout at one of (P5, P6, or P7) in PreResNet.
\end{proposition}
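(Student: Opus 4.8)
The plan is to reduce the general statement to the $l=0$ case (Proposition \ref{pro:zeroth}) by showing that the residual branch $f_l$ can only \emph{shrink} whatever inconsistency ratio it is fed. Write $r_{l^\prime} \coloneqq \Delta(\mathbf{x}_{l^\prime})$ for the inconsistency ratio of the feature map entering the $l^\prime$th block. The first step I would carry out is to check that BN, ReLU, and a weight layer each leave $\Delta$ unchanged: a BN in the test phase applies the affine map built from the \emph{frozen} training statistics, so $\Var[\BN_{\test}(\mathbf{u})] = \gamma^2\,\Var[\mathbf{u}_{\test}]/\Var[\mathbf{u}_{\train}]$ while $\Var[\BN_{\train}(\mathbf{u})] = \gamma^2$; a ReLU whose input is mean-zero (it follows a BN with $\beta = 0$) satisfies $\Var[\ReLU(\mathbf{u})] = \frac{\pi-1}{2\pi}\Var[\mathbf{u}]$ in both phases; and a weight layer satisfies $\Var[\mathbf{W}\mathbf{u}] = 2\Var[\mathbf{u}]$ in both phases. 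Composing [BN--ReLU--Weight--BN], the output $\mathbf{z}_l$ of the second BN in $f_l$ therefore inherits $\Delta(\mathbf{z}_l) = \Delta(\mathbf{x}_l) = r_l$, that is, $\Var[\mathbf{z}_{l,\train}] = \gamma_l^2$ and $\Var[\mathbf{z}_{l,\test}] = r_l\gamma_l^2$.

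Next I would rerun the [ReLU--Dropout--Weight] computation from the text with this $\mathbf{z}_l$. The training-phase variance is built only from $\Var[\mathbf{z}_{l,\train}] = \gamma_l^2$ and $\E[\mathbf{z}_{l,\train}] = 0$, so it is unaffected by $r_l$: $\Var[f_{l,\train}(\mathbf{x}_l)] = \frac{\pi/p-1}{\pi}\gamma_l^2$ exactly as before. The test-phase variance picks up the factor $r_l$: $\Var[f_{l,\test}(\mathbf{x}_l)] = r_l\,\frac{\pi-1}{\pi}\gamma_l^2$. Hence
\[
\Delta(f_l(\mathbf{x}_l)) = r_l \cdot \frac{p(\pi-1)}{\pi-p} < r_l = \Delta(\mathbf{x}_l),
\]
the strict inequality being exactly $p(\pi-1) < \pi-p \iff p < 1$. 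One also needs $0 < r_l \le 1$: if no dropout precedes block $l$ then $r_l = 1$ and we are literally in the setting of Proposition \ref{pro:zeroth}, while in general, since every earlier block is residual and every dropout sits at one of (P5, P6, P7), an induction on $l^\prime$ using the mediant step below keeps $0 < r_{l^\prime} \le 1$.

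The conclusion is then the mediant inequality: for positive reals, $a/b < c/d$ forces $a/b < (a+c)/(b+d) < c/d$. Applying it with $a/b = \Var[f_{l,\test}(\mathbf{x}_l)]/\Var[f_{l,\train}(\mathbf{x}_l)] = \Delta(f_l(\mathbf{x}_l))$ and $c/d = \Var[\mathbf{x}_{l,\test}]/\Var[\mathbf{x}_{l,\train}] = \Delta(\mathbf{x}_l)$, and using that the residual-branch output is uncorrelated with $\mathbf{x}_l$ at initialization so that $\Var[\mathbf{x}_l + f_l(\mathbf{x}_l)] = \Var[\mathbf{x}_l] + \Var[f_l(\mathbf{x}_l)]$ in each phase (the same simplification as in Proposition \ref{pro:zeroth}), the mediant $(a+c)/(b+d)$ is precisely $\Delta(\mathbf{x}_l + f_l(\mathbf{x}_l))$, giving $\Delta(f_l(\mathbf{x}_l)) < \Delta(\mathbf{x}_l + f_l(\mathbf{x}_l)) < \Delta(\mathbf{x}_l) \le 1$. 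Finally P5 gives the same variances as P6 by Proposition \ref{pro:relu} and P7 by Proposition \ref{pro:postpre} with the zero-mean last weight, so the claim holds for all of (P5, P6, P7). I expect the main obstacle to be the first step: establishing rigorously, under the paper's modeling assumptions (frozen BN statistics, zero-centered Gaussian pre-activations, $\Var[\mathbf{W}\mathbf{u}] = 2\Var[\mathbf{u}]$), that $\Delta$ is \emph{exactly} preserved by BN, ReLU, and the weight layer, since that is what makes $\mathbf{z}_l$ inherit the input ratio and thereby forces $\Delta(f_l(\mathbf{x}_l)) < \Delta(\mathbf{x}_l)$ --- the inequality that drives the mediant argument.
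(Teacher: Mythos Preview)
Your argument is sound and reaches the desired conclusion, but it proceeds differently from the paper. The paper does \emph{not} track how $r_l = \Delta(\mathbf{x}_l)$ feeds through the [BN--ReLU--Weight--BN] prefix of $f_l$; it simply posits $\mathbf{z}_l \sim \mathcal{N}(0,\gamma_l^2)$ in both phases, so that $\Delta(f_l(\mathbf{x}_l)) = (\pi-1)/(\pi/p-1)$ is a constant independent of $l$. It then unrolls the recursion $\Var[\mathbf{x}_{l+1}] = \Var[\mathbf{x}_l] + \Var[f_l(\mathbf{x}_l)]$ to the closed forms $\Var[\mathbf{x}_{l+1,\cdot}] = \Var[\mathbf{x}_0] + c_{\cdot}\sum_{i=0}^{l}\gamma_i^2$ and applies the one-sided inequality $x/y < 1 \Rightarrow x/y < (x+c)/(y+c)$ with $c = \Var[\mathbf{x}_0]$. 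Your route is more structural: you argue that BN (with frozen training statistics), ReLU, and the weight layer each leave $\Delta$ invariant, so the dropout inside $f_l$ is the only operation that shrinks it, giving $\Delta(f_l(\mathbf{x}_l)) = r_l\cdot(\pi-1)/(\pi/p-1) < r_l$; then a single mediant step finishes, with the induction on $l$ supplying $r_l \le 1$. This avoids unrolling, makes the inductive structure explicit, and yields the extra monotonicity $\Delta(\mathbf{x}_{l+1}) < \Delta(\mathbf{x}_l)$. The paper's explicit sums, on the other hand, expose the role of $\Var[\mathbf{x}_0]$ directly, which is exactly the quantity varied in the empirical validation that follows. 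One small correction to your first step: under He initialization with zero-mean weights, $\Var[\mathbf{W}\mathbf{u}] = 2\,\E[u^2]$ rather than $2\Var[\mathbf{u}]$; the two differ after a ReLU, but since both $\Var[\ReLU(\mathbf{z})]$ and $\E[\ReLU(\mathbf{z})^2]$ scale linearly with $\Var[\mathbf{z}]$ for zero-mean Gaussian $\mathbf{z}$, your conclusion that the weight layer preserves $\Delta$ in this pipeline still stands.
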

\begin{proof}
	The skip connection adds the result of the residual branch as $\mathbf{x}_{l+1} = \mathbf{x}_l + f_l(\mathbf{x}_l)$. As \citet{DBLP:conf/nips/DeS20,DBLP:conf/iclr/BrockDS21} describe, the residual block accumulates its variance:
	\begin{align}
		\Var[\mathbf{x}_{l+1}] = \Var[\mathbf{x}_l] + \Var[f_l(\mathbf{x}_l)].
	\end{align}
	Thus, $\mathbf{x}_l$ is the accumulation of the residual branches from $0$ to $l-1$. For the training and test phases,
	\begin{align}
		\Var[\mathbf{x}_{l,\train}] & = \Var[\mathbf{x}_0] + \frac{\pi/p - 1}{\pi} \sum_{i=0}^{l-1} \gamma_i^2, \\
		\Var[\mathbf{x}_{l,\test}]  & = \Var[\mathbf{x}_0] + \frac{\pi - 1}{\pi} \sum_{i=0}^{l-1} \gamma_i^2.
	\end{align}
	First, if we choose a non-residual block for the $l$th block, we obtain $f_l(\mathbf{x}_l)$ and its inconsistency ratio as
	\begin{align}
		\Delta(f_l(\mathbf{x}_l)) & = \frac{\Var[f_{l,\test}(\mathbf{x}_l)]}{\Var[f_{l,\train}(\mathbf{x}_l)]}                               \\
		                          & = \frac{\frac{\pi - 1}{\pi} \gamma_l^2}{\frac{\pi/p - 1}{\pi} \gamma_l^2} = \frac{\pi - 1}{\pi / p - 1}.
	\end{align}
	Second, if we choose a residual block for the $l$th block, we obtain $\mathbf{x}_l + f_l(\mathbf{x}_l)$ and its inconsistency ratio as
	\begin{align}
		\Delta(\mathbf{x}_l + f_l(\mathbf{x}_l)) & = \Delta(\mathbf{x}_{l+1}) = \frac{\Var[\mathbf{x}_{l+1, \test}]}{\Var[\mathbf{x}_{l+1, \train}]}                                                      \\
		                                         & = \frac{\Var[\mathbf{x}_0] + \frac{\pi - 1}{\pi} \sum_{i=0}^{l} \gamma_i^2}{\Var[\mathbf{x}_0] + \frac{\pi/p - 1}{\pi} \sum_{i=0}^{l} \gamma_i^2} < 1.
	\end{align}
	Finally, it is known that for $x>0$, $y>0$, and $c>0$, if $\frac{x}{y}<1$, then $\frac{x}{y}<\frac{x+c}{y+c}$. Using this inequality, we obtain
	\begin{align}
		\Delta(\mathbf{x}_l + f_l(\mathbf{x}_l)) & > \frac{ \frac{\pi - 1}{\pi} \sum_{i=0}^{l} \gamma_i^2}{\frac{\pi/p - 1}{\pi} \sum_{i=0}^{l} \gamma_i^2} = \frac{\pi - 1}{\pi / p -1} \\
		                                         & = \Delta(f_l(\mathbf{x}_l)),
	\end{align}
	which concludes the proof of this proposition.
\end{proof}
The difference between the two inconsistency ratios is due to $\Var[\mathbf{x}_0]$. Next, we empirically test the effect of $\Var[\mathbf{x}_0]$.

\begin{figure}[t!]
	\centering
	\includegraphics[width=0.89\linewidth]{./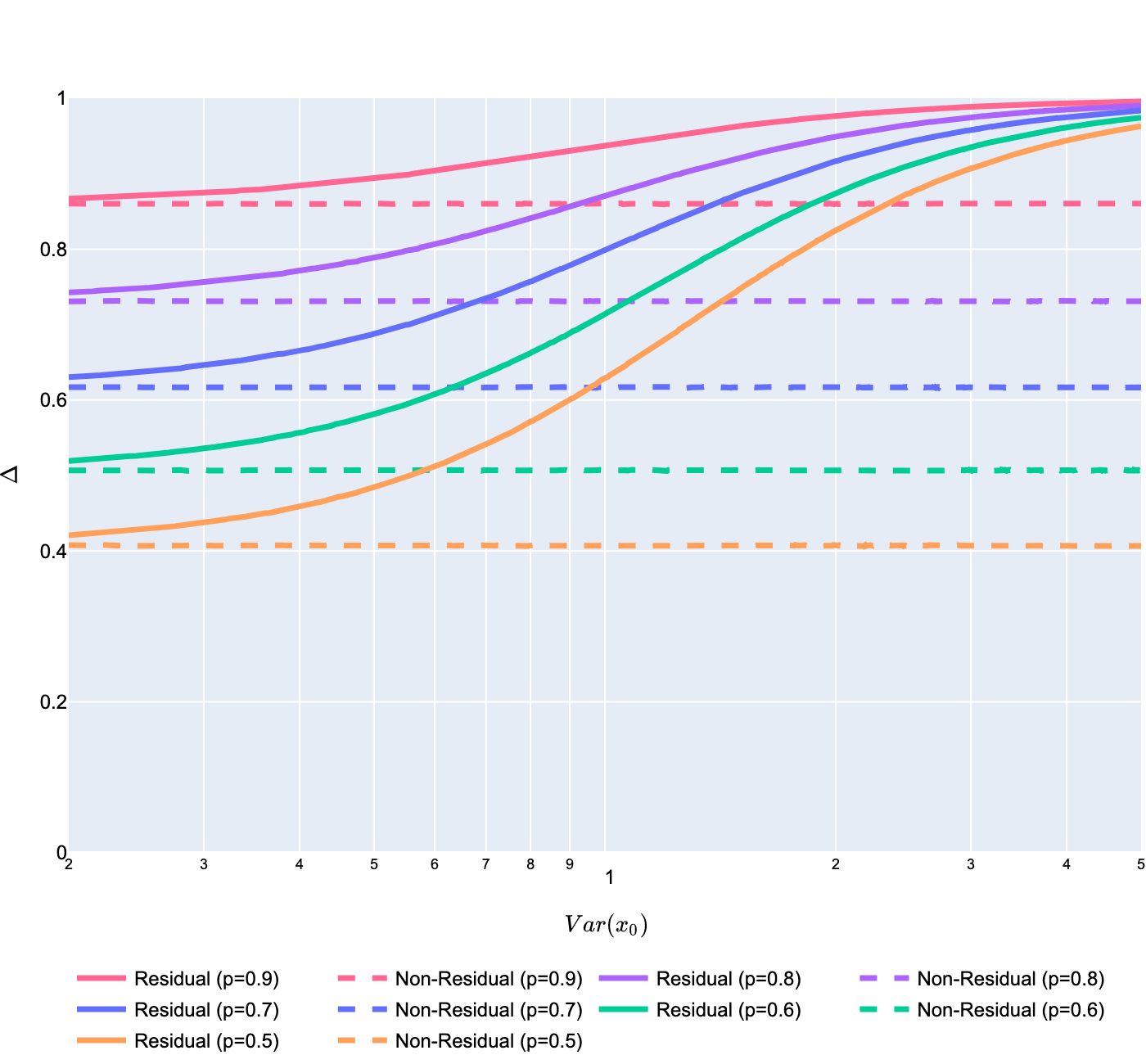}
	\caption{Empirical validation on Propositions \ref{pro:zeroth} and \ref{pro:lth}.}
	\label{fig:pro3}
\end{figure}

\paragraph{Empirical Observation} We measured the two inconsistency ratios, $\Delta(f_l(\mathbf{x}_l))$ and $\Delta(\mathbf{x}_l + f_l(\mathbf{x}_l))$. We tested five different keep probabilities $\{0.5,\ 0.6,\ 0.7,\ 0.8,\ 0.9\}$ for dropout. We applied dropout at P6 to construct the [BN--ReLU--Weight--BN--ReLU--Dropout--Weight] pipeline. We used a mini-batch size of $10^5$ with a width $n$ of 128. We initialized the weights using the He initialization $\mathcal{N}(0,\ 2/n)$. We varied $\Var[\mathbf{x}_0]$ and observed two inconsistency ratios.

The results are summarized in Figure \ref{fig:pro3}. We observed that $\Delta(f_l(\mathbf{x}_l)) < \Delta(\mathbf{x}_l + f_l(\mathbf{x}_l)) < 1$. Note that we do not say that we should achieve $\Var[\mathbf{x}_0] \rightarrow \infty$; our claim is that as long as $\Var[\mathbf{x}_0]$ is nonzero, $\Delta(\mathbf{x}_l + f_l(\mathbf{x}_l))$ obtains a gain closer to one compared to the non-residual block.

Finally, we discard P7, which corresponds to PostDropout (Proposition \ref{pro:postpre}). In summary, based on Propositions \ref{pro:relu}--\ref{pro:lth} and the above analyses, we conclude with the following guideline.
\begin{guideline} \label{gui:my}
	For each residual block of PreResNet, apply one dropout after the last BN but before the last weight layer, \eg, at P5 or P6.
\end{guideline}

\subsection{Dropout in Head}
So far, we have discussed the use of dropout in residual blocks. Additionally, we consider applying dropout in the \textit{head}, which takes the output of the last residual block as the input and outputs the final prediction. Indeed, \citet{DBLP:conf/nips/BelloFDCSLSZ21} observed improved performance when applying dropout after the GAP but before the fully connected layer. This practice has been adopted in several neural networks such as MobileNetV2, EfficientNet, EfficientNetV2, MnasNet, NASNet, and Inception-v4 \citep{DBLP:conf/cvpr/SandlerHZZC18,DBLP:conf/icml/TanL19,DBLP:conf/icml/TanL21,DBLP:conf/cvpr/TanCPVSHL19,DBLP:conf/cvpr/ZophVSL18,DBLP:conf/aaai/SzegedyIVA17}.

In light of this practice, we theoretically investigate the best position in the head to apply dropout. We examine seven possible positions to apply dropout, labeled H1 to H7 (Figure \ref{fig:head}). First, owing to the presence of BN in the head, the use of dropout at H1 and H2 should be avoided. For the remaining positions, because there is no subsequent BN, we do not discuss the inconsistency ratio further.

For the head, we now emphasize preventing dropout from resulting in an unstable output. The cross-entropy loss with a one-hot encoded label is $-\log{\hat{y}_c}$, where $\hat{y}_c$ is the probability on the correct class. For example, applying dropout at H6 or H7 directly drops the predictions, which can result in a lower $\hat{y}_c$ and a significantly larger loss. This large loss can occur even with a correct prediction, resulting in an unstable gradient descent. To obtain a stable loss, it is favorable to have a small variance at the output of the head, thus avoiding H6 and H7. Therefore, we are left with (H3, H4, or H5). Existing practices have preferred to apply dropout at H5; however, we claim that applying dropout at H3 or H4 results in a smaller variance and thus is more advantageous than H5.

\begin{figure}[t!]
	\centering
	\includegraphics[width=0.53\linewidth]{./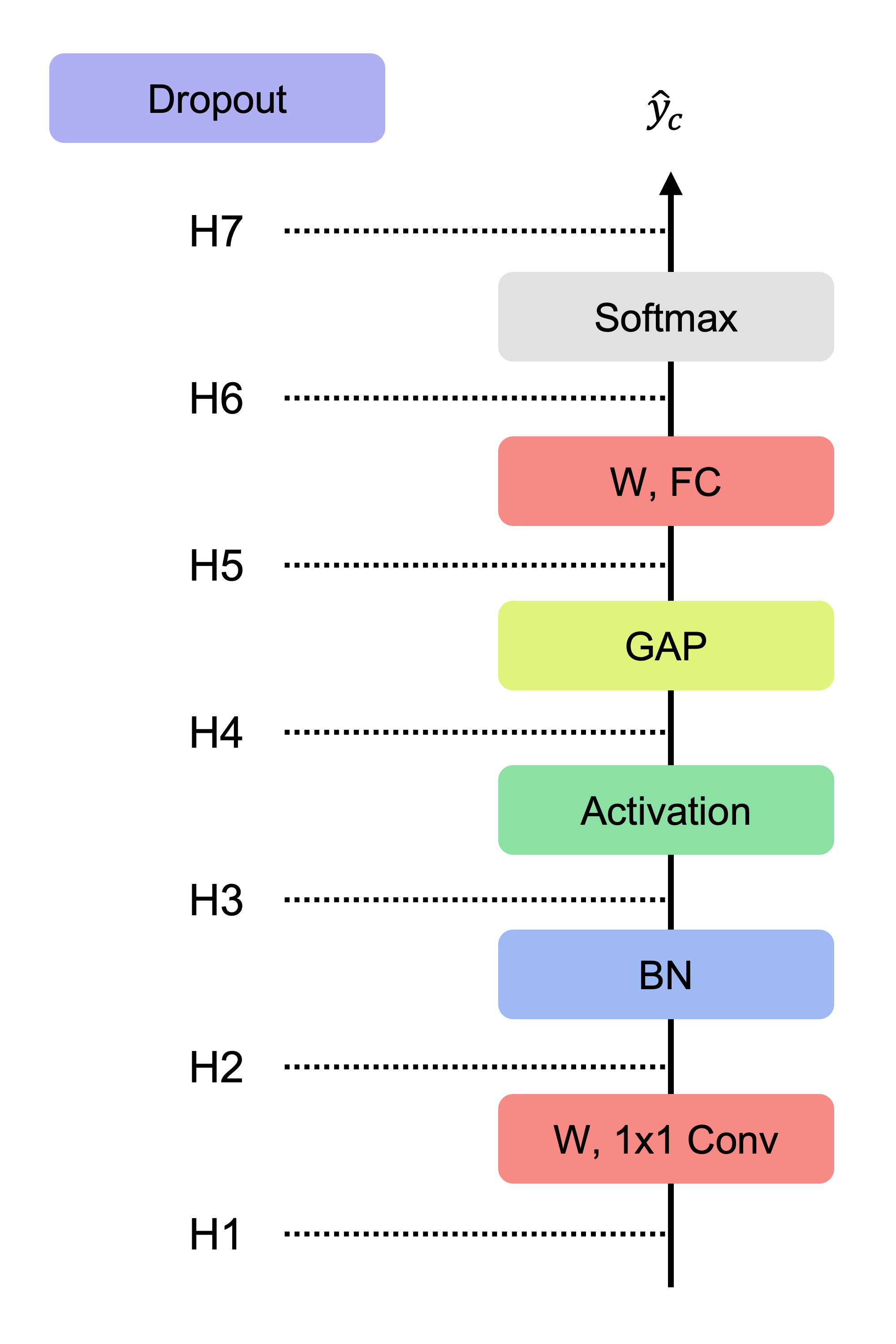}
	\caption{Seven possible positions in head to apply dropout. The composition of the head can vary depending on the model; we illustrate the head commonly deployed in models such as MobileNetV2 and EfficientNet.}
	\label{fig:head}
\end{figure}

\begin{proposition} \label{pro:head}
	Applying dropout before GAP (H4) in the head exhibits less variance compared to after GAP (H5):
	\begin{align}
		 & \Var[\underbrace{\GAP(\Dropout_{\train}(\mathbf{x}))}_{\text{H4}}]    \\
		 & < \Var[\underbrace{\Dropout_{\train}(\GAP(\mathbf{x}))}_{\text{H5}}].
	\end{align}
\end{proposition}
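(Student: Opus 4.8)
The plan is to make the GAP explicit, compute the output variance (over both the mini-batch and the dropout mask) in the two orders by a direct second-moment expansion, and subtract. Fix one channel and write $\GAP(\mathbf{x}) = \bar{x} \coloneqq \frac{1}{S}\sum_{s=1}^{S} x_s$, where $s$ indexes the $S$ spatial locations and $x_s$ is the activation at location $s$. Since the head applies ReLU immediately before GAP, the feature entering H4 and H5 is a ReLU output, so $x_s \ge 0$ for all $s$; for the clean computation I would treat $x_1,\dots,x_S$ as i.i.d.\ across spatial locations with $\E[x_s]=\mu\ge 0$ and $\Var[x_s]=\sigma^2$ (the general case follows by carrying the covariances $\E[x_s x_t]$ through the same steps). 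The key structural observation is that standard dropout draws an independent Bernoulli mask for every spatial entry, so at H4 we have $\GAP(\Dropout_{\train}(\mathbf{x})) = \frac{1}{Sp}\sum_{s=1}^{S} m_s x_s$ with $m_s$ i.i.d.\ $\Bernoulli(p)$ and independent of the data, whereas at H5 a single mask acts on the pooled scalar, $\Dropout_{\train}(\GAP(\mathbf{x})) = \frac{1}{p} m\,\bar{x}$.

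For H5 I would invoke the variance identity already derived in the Background, $\Var[\Dropout_{\train}(y)] = \frac{1}{p}\Var[y] + \frac{1-p}{p}(\E[y])^2$, with $y=\bar{x}$, and rewrite it as $\Var[\text{H5}] = \Var[\bar x] + \frac{1-p}{p}\E[\bar x^2]$. For H4 I would expand $\Var\bigl[\frac{1}{Sp}\sum_s m_s x_s\bigr]$ directly, using $\E[m_s]=\E[m_s^2]=p$ and $\E[m_s m_t]=p^2$ for $s\neq t$ together with mask--data independence; the off-diagonal terms of the square reassemble precisely into $\Var[\bar x]$, leaving $\Var[\text{H4}] = \Var[\bar x] + \frac{1-p}{S^2 p}\sum_{s}\E[x_s^2]$. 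Subtracting,
\begin{align}
 \Var[\text{H5}] - \Var[\text{H4}] &= \frac{1-p}{p}\Bigl(\E[\bar x^2] - \tfrac{1}{S^2}\textstyle\sum_s \E[x_s^2]\Bigr) \\
 &= \frac{1-p}{S^2 p}\sum_{s \ne t}\E[x_s x_t],
\end{align}
which in the i.i.d.\ model equals $\frac{(1-p)(S-1)}{S p}\,\mu^2$.

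It then remains to argue positivity. Since $p\in(0,1)$ the prefactor is strictly positive, and since $\mathbf{x}$ is a ReLU output we have $x_s x_t \ge 0$ pointwise, hence $\E[x_s x_t]\ge 0$ for every $s\ne t$; the sum is strictly positive as soon as some pair of spatial locations is jointly nonzero with positive probability, equivalently (in the i.i.d.\ model) as soon as $\mu>0$, which is generic for a ReLU activation. This gives $\Var[\text{H4}] < \Var[\text{H5}]$, i.e., $\Var[\GAP(\Dropout_{\train}(\mathbf{x}))] < \Var[\Dropout_{\train}(\GAP(\mathbf{x}))]$. The intuition to convey alongside the algebra is that GAP already damps variance by a factor of order $1/S$, so placing dropout \emph{before} GAP lets this averaging also absorb the extra $\tfrac{1-p}{p}$-type variance that dropout injects, while placing it \emph{after} GAP dumps that variance into an already-pooled scalar where no averaging remains.

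I expect the only genuinely delicate point to be the modeling assumptions rather than the computation: one must justify treating the per-channel spatial activations as i.i.d.\ (or at least non-negatively correlated), and must acknowledge the degenerate cases $p=1$ or ``at most one spatial location ever active,'' where the two variances coincide. Everything else is routine bookkeeping with first and second moments of Bernoulli masks, and the conclusion is robust in that the non-strict inequality $\Var[\text{H4}]\le\Var[\text{H5}]$ needs only the non-negativity of ReLU outputs.
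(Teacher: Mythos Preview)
Your argument is correct and follows essentially the same route as the paper's appendix proof: expand both variances directly using $\E[m_s]=\E[m_s^2]=p$ and $\E[m_s m_t]=p^2$ for $s\neq t$, obtain $\Var[\text{H5}]-\Var[\text{H4}]=\frac{1-p}{S^2 p}\sum_{s\neq t}\E[x_s x_t]$, and conclude strict positivity from the non-negativity of ReLU outputs. Your explicit discussion of the degenerate equality cases ($p=1$ or at most one active location) and the remark that only non-negative cross-moments, not full i.i.d., are needed are welcome clarifications beyond what the paper spells out.
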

The difference arises from the fact that dropout before the GAP masks each element of the feature map, whereas dropout after the GAP masks each channel of the feature map. A detailed proof can be found in the Appendix.

\begin{table*}[ht!]
	\centering
	\caption{Test accuracy on CIFAR dataset. All accuracies in this paper are expressed in percentage units. The difference from baseline performance is presented to the right. $^*$ indicates applying dropout following Guideline \ref{gui:my}.}\label{tab:cifar}
	\begin{tabular}{l|rr|rr|rr|rr}
		\toprule % from booktabs package
		           & \multicolumn{4}{c|}{CIFAR-10}     & \multicolumn{4}{c}{CIFAR-100}                                                                                                                                                                                \\
		           & \multicolumn{2}{c|}{PreResNet-50} & \multicolumn{2}{c|}{PreResNet-110} & \multicolumn{2}{c|}{PreResNet-50} & \multicolumn{2}{c}{PreResNet-110}                                                                                                   \\
		           & Accuracy                          & Difference                         & Accuracy                          & Difference                        & Accuracy         & Difference                  & Accuracy         & Difference                  \\
		\midrule
		No Dropout & 93.6633                           & -                                  & 94.0300                           & -                                 & 71.3900          & -                           & 73.5367          & -                           \\
		\midrule
		P0         & 84.2500                           & (\textcolor{red}{-9.4133})         & 70.9433                           & (\textcolor{red}{-23.0867})       & 52.6200          & (\textcolor{red}{-18.7700}) & 18.3467          & (\textcolor{red}{-55.1900}) \\
		P1         & 92.9167                           & (\textcolor{red}{-0.7467})         & 93.5467                           & (\textcolor{red}{-0.4833})        & 70.2000          & (\textcolor{red}{-1.1900})  & 71.2733          & (\textcolor{red}{-2.2633})  \\
		P2         & 93.1933                           & (\textcolor{red}{-0.4700})         & 93.6867                           & (\textcolor{red}{-0.3433})        & 71.1433          & (\textcolor{red}{-0.2467})  & 72.4133          & (\textcolor{red}{-1.1233})  \\
		P3         & 93.5833                           & (\textcolor{red}{-0.0800})         & 93.8133                           & (\textcolor{red}{-0.2167})        & 71.0767          & (\textcolor{red}{-0.3133})  & 72.3967          & (\textcolor{red}{-1.1400})  \\
		P4         & 93.2167                           & (\textcolor{red}{-0.4467})         & 93.8933                           & (\textcolor{red}{-0.1367})        & 70.4800          & (\textcolor{red}{-0.9100})  & 72.1400          & (\textcolor{red}{-1.3967})  \\
		P5$^*$     & \textbf{93.8333}                  & (\textcolor{blue}{+0.1700})        & \textbf{94.4367}                  & (\textcolor{blue}{+0.4067})       & 72.3633          & (\textcolor{blue}{+0.9733}) & 73.6300          & (\textcolor{blue}{+0.0933}) \\
		P6$^*$     & 93.6767                           & (\textcolor{blue}{+0.0133})        & 94.2200                           & (\textcolor{blue}{+0.1900})       & \textbf{72.4267} & (\textcolor{blue}{+1.0367}) & \textbf{73.9800} & (\textcolor{blue}{+0.4433}) \\
		P7         & 93.7800                           & (\textcolor{blue}{+0.1167})        & 94.2667                           & (\textcolor{blue}{+0.2367})       & 72.0833          & (\textcolor{blue}{+0.6933}) & 73.5633          & (\textcolor{blue}{+0.0267}) \\
		\bottomrule
	\end{tabular}
\end{table*}

\begin{table*}[ht!]
	\centering
	\caption{Experimental results on other conditions.}
	\label{tab:condition}
	\begin{tabular}{l|rr|rr|rr|rr}
		\toprule
		                       & \multicolumn{2}{c|}{Weight Decay $10^{-3}$} & \multicolumn{2}{c|}{Weight Decay $10^{-5}$} & \multicolumn{2}{c|}{Bottleneck} & \multicolumn{2}{c}{ELU}                                                                                       \\
		                       & Accuracy                                    & Difference                                  & Accuracy                        & Difference                  & Accuracy & Difference                  & Accuracy & Difference                  \\
		\midrule
		No Dropout             & 93.2100                                     & -                                           & 92.3733                         & -                           & 93.7667  & -                           & 92.6100  & -                           \\
		\midrule
		Guideline \ref{gui:my} & 93.3567                                     & (\textcolor{blue}{+0.1467})                 & 92.8133                         & (\textcolor{blue}{+0.4400}) & 93.7933  & (\textcolor{blue}{+0.0267}) & 92.7433  & (\textcolor{blue}{+0.1333}) \\
		\bottomrule
	\end{tabular}
\end{table*}

In summary, we conclude with the following guideline:
\begin{guideline} \label{gui:head}
	In the head, apply one dropout after the BN but before the GAP layer, \eg, at H3 or H4.
\end{guideline}

\section{Experiments}\label{sec:exp}

\begin{table*}[ht!]
	\centering
	\caption{Experimental results using the ResNetV1.}
	\label{tab:resnetv1}
	\begin{tabular}{l|rr|rr|rr|rr}
		\toprule
		                       & \multicolumn{4}{c|}{CIFAR-10}    & \multicolumn{4}{c}{CIFAR-100}                                                                                                                                                             \\
		                       & \multicolumn{2}{c|}{ResNetV1-50} & \multicolumn{2}{c|}{ResNetV1-110} & \multicolumn{2}{c|}{ResNetV1-50} & \multicolumn{2}{c}{ResNetV1-110}                                                                                   \\
		                       & Accuracy                         & Difference                        & Accuracy                         & Difference                       & Accuracy & Difference                  & Accuracy & Difference                  \\
		\midrule
		No Dropout             & 93.2700                          & -                                 & 93.7067                          & -                                & 70.6200  & -                           & 71.8833  & -                           \\
		\midrule
		Guideline \ref{gui:my} & 93.4600                          & (\textcolor{blue}{+0.1900})       & 93.9500                          & (\textcolor{blue}{+0.2433})      & 71.4867  & (\textcolor{blue}{+0.8667}) & 73.1133  & (\textcolor{blue}{+1.2300}) \\
		\bottomrule
	\end{tabular}
\end{table*}

\begin{table*}[ht!]
	\centering
	\caption{Test accuracy on Caltech-101 and Oxford-IIIT Pet datasets.}
	\label{tab:petcal}
	\begin{tabular}{l|rr|rr|rr|rr}
		\toprule
		                       & \multicolumn{4}{c|}{Caltech-101}  & \multicolumn{4}{c}{Oxford-IIIT Pet}                                                                                                                                                           \\
		                       & \multicolumn{2}{c|}{PreResNet-50} & \multicolumn{2}{c|}{PreResNet-101}  & \multicolumn{2}{c|}{PreResNet-50} & \multicolumn{2}{c}{PreResNet-101}                                                                                   \\
		                       & Accuracy                          & Difference                          & Accuracy                          & Difference                        & Accuracy & Difference                  & Accuracy & Difference                  \\
		\midrule
		No Dropout             & 83.3212                           & -                                   & 83.8074                           & -                                 & 83.8147  & -                           & 84.5668  & -                           \\
		\midrule
		Guideline \ref{gui:my} & 83.7831                           & (\textcolor{blue}{+0.4620})         & 84.1964                           & (\textcolor{blue}{+0.3890})       & 83.9049  & (\textcolor{blue}{+0.0903}) & 85.5897  & (\textcolor{blue}{+1.0229}) \\
		\bottomrule
	\end{tabular}
\end{table*}

\subsection{Dropout in Residual Block}
\paragraph{CIFAR Dataset}
We conducted experiments to observe the performance differences due to the dropout position. First, we compared the performance of PreResNet trained without and with dropout at one of (P0, $\cdots$, P7). We trained PreResNet-$\{50,\ 110\}$ on a multi-class classification task using the CIFAR-$\{10,\ 100\}$ datasets \citep{Krizhevsky09learningmultiple}. See the Appendix for details such as the hyperparameters used. An average of three runs was reported for each result (Table \ref{tab:cifar}).

The experimental results were in agreement with our claims. The greatest accuracy was observed when dropout was applied at P5 or P6, confirming the validity of Guideline \ref{gui:my}. Applying dropout at (P5, P6, or P7) resulted in improved accuracy, whereas applying dropout at (P0, P1, P2, P3, or P4) decreased accuracy. This observation implies that the placement of dropout after the second BN matters more than whether it is applied after weight or ReLU. That is, when dropout was applied before the second BN, neither PreDropout nor PostDropout improved the accuracy. An explanation for this phenomenon requires Propositions \ref{pro:zeroth} and \ref{pro:lth}, and is unique to our analysis compared to the existing literature.

\paragraph{Other Conditions}
We further validated Guideline \ref{gui:my} using other experimental setups. To test different weight conditions, we varied the weight decay from $10^{-4}$ to $10^{-3}$ or $10^{-5}$. We also experimented with PreResNet using a bottleneck block, which had three [BN--ReLU--Weight] pipelines, unlike the basic block. In this case, to follow Guideline \ref{gui:my}, we applied dropout after the third ReLU but before the third weight layer. We also tested our guideline with ELU \citep{DBLP:journals/corr/ClevertUH15} to replace ReLU. For the four experimental setups, we observed an improved accuracy for CIFAR-10 and PreResNet-50 (Table \ref{tab:condition}). Note that applying dropout did not always improve and could actually degrade the performance (Table \ref{tab:cifar}); however, applying dropout in accordance with Guideline \ref{gui:my} consistently and successfully improved performance.

\paragraph{Other Models}
In addition, we experimented with Guideline \ref{gui:my} using the original ResNetV1, whose residual branch had two [Weight--BN--ReLU] pipelines. In this case, to follow Guideline \ref{gui:my}, we applied dropout at the end of the residual branch. We used the ResNetV1-$\{50,\ 110\}$ and CIFAR-$\{10,\ 100\}$ datasets. Again, we observed improved accuracy from dropout using Guideline \ref{gui:my} (Table \ref{tab:resnetv1}).

\paragraph{Other Datasets}
We further validated our claim using other datasets. Two datasets were targeted, Caltech-101 and Oxford-IIIT Pet \citep{DBLP:journals/cviu/Fei-FeiFP07,DBLP:conf/cvpr/ParkhiVZJ12}. We used PreResNet-$\{50,\ 101\}$ with bottleneck block. See the Appendix for details such as the hyperparameters used. On the two datasets and two PreResNets, we observed that applying dropout following Guideline \ref{gui:my} improved the test accuracy (Table \ref{tab:petcal}).

\subsection{Dropout in Head}

\begin{table}[t!]
	\centering
	\caption{Experimental results on dropout at the head. ``P'' represents PreResNet. $^*$ indicates applying dropout following Guideline \ref{gui:head}.}
	\label{tab:petcalhead}
	\begin{tabular}{l|l|r|r|r}
		\toprule
		\textbf{Dataset}     & \textbf{Model} & \textbf{No Dropout} & \textbf{H4}$^*$ & \textbf{H5} \\
		\midrule
		\multirow{2}{*}{Cal} & P-50           & 83.321              & 83.978          & 83.005      \\
		                     & P-101          & 83.807              & 84.999          & 83.662      \\
		\midrule
		\multirow{2}{*}{Pet} & P-50           & 83.815              & 84.988          & 84.416      \\
		                     & P-101          & 84.567              & 85.229          & 85.259      \\
		\bottomrule
	\end{tabular}
\end{table}

\paragraph{Caltech-101 and Oxford-IIIT Pet} We experimented with dropout in the head. We compared three cases: training without dropout and with dropout at one of (H4, H5). We trained PreResNet-$\{50,\ 101\}$ on the Caltech-101 and Oxford-IIIT Pet datasets. The average of three runs was reported for each result (Table \ref{tab:petcalhead}). We observed that applying dropout at H4 demonstrated greater accuracy than H5. Applying dropout at H5, which is the current consensus in existing studies, improved the accuracy on Oxford-IIIT Pet, yet decreased accuracy on Caltech-101.

\begin{table}[t!]
	\centering
	\caption{Top-1 accuracy on ImageNet.}
	\label{tab:inhead}
	\begin{tabular}{l|r|r|r}
		\toprule
		\textbf{Model}    & \textbf{No Dropout} & \textbf{H4}$^*$ & \textbf{H5} \\
		\midrule
		MobileNetV2 (1.4) & 75.714              & 75.820          & 75.718      \\
		EfficientNet-B0   & 77.156              & 77.240          & 76.976      \\
		ResNet-50         & 78.834              & 78.932          & 78.546      \\
		DenseNet-169      & 79.066              & 79.152          & 79.036      \\
		\bottomrule
	\end{tabular}
\end{table}

\paragraph{ImageNet} We further validated our claim using another dataset and models. We targeted ImageNet, a widely used large-scale dataset. See the Appendix for details such as the hyperparameters used. Because our analysis on the head is applicable to any model that employs GAP, we targeted other models: MobileNetV2, EfficientNet, ResNet, and DenseNet. We observed that applying dropout at H4 consistently improved Top-1 accuracy on the ImageNet dataset (Table \ref{tab:inhead}). Note that MobileNetV2 and EfficientNet originally employed dropout at H5; however, we found that it could only marginally influence or even decrease accuracy.

\section{Conclusion}\label{sec:con}
In this study, we investigated the correct position for applying dropout. We demonstrated that the dropout position influences the variance inconsistency and sought the best position that provides an inconsistency ratio close to one. By analyzing the theoretical properties of the residual networks, we discovered the correct position to apply dropout was after the last BN but before the last weight layer. In several experiments, we observed increased and decreased accuracy depending on the position of dropout, explaining the reason for the performance change using our analysis. In addition, we provided a guideline on applying dropout at the head and validated the improved performance through experiments. We hope that these findings will help practitioners understand and benefit from dropouts.

\bibliography{mybib}

\begin{thebibliography}{34}
\providecommand{\natexlab}[1]{#1}
\providecommand{\url}[1]{\texttt{#1}}
\expandafter\ifx\csname urlstyle\endcsname\relax
  \providecommand{\doi}[1]{doi: #1}\else
  \providecommand{\doi}{doi: \begingroup \urlstyle{rm}\Url}\fi

\bibitem[Bello et~al.(2021)Bello, Fedus, Du, Cubuk, Srinivas, Lin, Shlens, and
  Zoph]{DBLP:conf/nips/BelloFDCSLSZ21}
Irwan Bello, William Fedus, Xianzhi Du, Ekin~Dogus Cubuk, Aravind Srinivas,
  Tsung{-}Yi Lin, Jonathon Shlens, and Barret Zoph.
\newblock {Revisiting ResNets: Improved Training and Scaling Strategies}.
\newblock In \emph{NeurIPS}, 2021.

\bibitem[Brock et~al.(2021)Brock, De, and Smith]{DBLP:conf/iclr/BrockDS21}
Andrew Brock, Soham De, and Samuel~L. Smith.
\newblock {Characterizing signal propagation to close the performance gap in
  unnormalized ResNets}.
\newblock In \emph{{ICLR}}, 2021.

\bibitem[Cai et~al.(2019)Cai, Gao, Zhang, Wang, Chen, and
  Ooi]{DBLP:journals/corr/abs-1904-03392}
Shaofeng Cai, Jinyang Gao, Meihui Zhang, Wei Wang, Gang Chen, and Beng~Chin
  Ooi.
\newblock {Effective and Efficient Dropout for Deep Convolutional Neural
  Networks}.
\newblock \emph{CoRR}, abs/1904.03392, 2019.

\bibitem[Castro et~al.(2021)Castro, Souto, Ogasawara, Porto, and
  Bezerra]{DBLP:journals/ijon/CastroSOP021}
Rafaela Castro, Yania~Molina Souto, Eduardo~S. Ogasawara, F{\'{a}}bio Porto,
  and Eduardo Bezerra.
\newblock {STConvS2S: Spatiotemporal Convolutional Sequence to Sequence Network
  for weather forecasting}.
\newblock \emph{Neurocomputing}, 2021.

\bibitem[Clevert et~al.(2016)Clevert, Unterthiner, and
  Hochreiter]{DBLP:journals/corr/ClevertUH15}
Djork{-}Arn{\'{e}} Clevert, Thomas Unterthiner, and Sepp Hochreiter.
\newblock {Fast and Accurate Deep Network Learning by Exponential Linear Units
  (ELUs)}.
\newblock In \emph{{ICLR}}, 2016.

\bibitem[De and Smith(2020)]{DBLP:conf/nips/DeS20}
Soham De and Samuel~L. Smith.
\newblock {Batch Normalization Biases Residual Blocks Towards the Identity
  Function in Deep Networks}.
\newblock In \emph{NeurIPS}, 2020.

\bibitem[Fei{-}Fei et~al.(2007)Fei{-}Fei, Fergus, and
  Perona]{DBLP:journals/cviu/Fei-FeiFP07}
Li~Fei{-}Fei, Robert Fergus, and Pietro Perona.
\newblock {Learning generative visual models from few training examples: An
  incremental Bayesian approach tested on 101 object categories}.
\newblock \emph{Comput. Vis. Image Underst.}, 2007.

\bibitem[Ghiasi et~al.(2018)Ghiasi, Lin, and Le]{DBLP:conf/nips/GhiasiLL18}
Golnaz Ghiasi, Tsung{-}Yi Lin, and Quoc~V. Le.
\newblock {DropBlock: {A} regularization method for convolutional networks}.
\newblock In \emph{NeurIPS}, 2018.

\bibitem[He et~al.(2015)He, Zhang, Ren, and Sun]{DBLP:conf/iccv/HeZRS15}
Kaiming He, Xiangyu Zhang, Shaoqing Ren, and Jian Sun.
\newblock {Delving Deep into Rectifiers: Surpassing Human-Level Performance on
  ImageNet Classification}.
\newblock In \emph{{ICCV}}, 2015.

\bibitem[He et~al.(2016{\natexlab{a}})He, Zhang, Ren, and
  Sun]{DBLP:conf/cvpr/HeZRS16}
Kaiming He, Xiangyu Zhang, Shaoqing Ren, and Jian Sun.
\newblock Deep residual learning for image recognition.
\newblock In \emph{{CVPR}}, 2016{\natexlab{a}}.

\bibitem[He et~al.(2016{\natexlab{b}})He, Zhang, Ren, and
  Sun]{DBLP:conf/eccv/HeZRS16}
Kaiming He, Xiangyu Zhang, Shaoqing Ren, and Jian Sun.
\newblock {Identity Mappings in Deep Residual Networks}.
\newblock In \emph{{ECCV}}, 2016{\natexlab{b}}.

\bibitem[Ioffe and Szegedy(2015)]{DBLP:conf/icml/IoffeS15}
Sergey Ioffe and Christian Szegedy.
\newblock {Batch Normalization: Accelerating Deep Network Training by Reducing
  Internal Covariate Shift}.
\newblock In \emph{{ICML}}, 2015.

\bibitem[Isola et~al.(2017)Isola, Zhu, Zhou, and
  Efros]{DBLP:conf/cvpr/IsolaZZE17}
Phillip Isola, Jun{-}Yan Zhu, Tinghui Zhou, and Alexei~A. Efros.
\newblock {Image-to-Image Translation with Conditional Adversarial Networks}.
\newblock In \emph{{CVPR}}, 2017.

\bibitem[Krizhevsky(2009)]{Krizhevsky09learningmultiple}
Alex Krizhevsky.
\newblock {Learning multiple layers of features from tiny images}.
\newblock Technical report, 2009.

\bibitem[Li et~al.(2019)Li, Chen, Hu, and Yang]{DBLP:conf/cvpr/0028C0019}
Xiang Li, Shuo Chen, Xiaolin Hu, and Jian Yang.
\newblock {Understanding the Disharmony Between Dropout and Batch Normalization
  by Variance Shift}.
\newblock In \emph{{CVPR}}, 2019.

\bibitem[Lim et~al.(2016)Lim, Gehre, and Kobbelt]{DBLP:journals/cgf/LimGK16}
Isaak Lim, Anne Gehre, and Leif Kobbelt.
\newblock {Identifying Style of 3D Shapes using Deep Metric Learning}.
\newblock \emph{Comput. Graph. Forum}, 2016.

\bibitem[Liu et~al.(2020)Liu, Xu, and Zhang]{DBLP:journals/corr/abs-2006-15619}
Brian Liu, Xianchao Xu, and Yu~Zhang.
\newblock {Offline Handwritten Chinese Text Recognition with Convolutional
  Neural Networks}.
\newblock \emph{CoRR}, abs/2006.15619, 2020.

\bibitem[Parkhi et~al.(2012)Parkhi, Vedaldi, Zisserman, and
  Jawahar]{DBLP:conf/cvpr/ParkhiVZJ12}
Omkar~M. Parkhi, Andrea Vedaldi, Andrew Zisserman, and C.~V. Jawahar.
\newblock {Cats and dogs}.
\newblock In \emph{{CVPR}}, 2012.

\bibitem[Pavllo et~al.(2019)Pavllo, Feichtenhofer, Grangier, and
  Auli]{DBLP:conf/cvpr/PavlloFGA19}
Dario Pavllo, Christoph Feichtenhofer, David Grangier, and Michael Auli.
\newblock {3D Human Pose Estimation in Video With Temporal Convolutions and
  Semi-Supervised Training}.
\newblock In \emph{{CVPR}}, 2019.

\bibitem[Pham and Le(2021)]{DBLP:conf/aaai/PhamL21}
Hieu Pham and Quoc~V. Le.
\newblock {AutoDropout: Learning Dropout Patterns to Regularize Deep Networks}.
\newblock In \emph{{AAAI}}, 2021.

\bibitem[Qi et~al.(2017)Qi, Yi, Su, and Guibas]{DBLP:conf/nips/QiYSG17}
Charles~Ruizhongtai Qi, Li~Yi, Hao Su, and Leonidas~J. Guibas.
\newblock {PointNet++: Deep Hierarchical Feature Learning on Point Sets in a
  Metric Space}.
\newblock In \emph{{NIPS}}, 2017.

\bibitem[Ravi and Larochelle(2017)]{DBLP:conf/iclr/RaviL17}
Sachin Ravi and Hugo Larochelle.
\newblock {Optimization as a Model for Few-Shot Learning}.
\newblock In \emph{{ICLR}}, 2017.

\bibitem[Romera et~al.(2018)Romera, Alvarez, Bergasa, and
  Arroyo]{DBLP:journals/tits/RomeraABA18}
Eduardo Romera, Jose~M. Alvarez, Luis~Miguel Bergasa, and Roberto Arroyo.
\newblock {ERFNet: Efficient Residual Factorized ConvNet for Real-Time Semantic
  Segmentation}.
\newblock \emph{{IEEE} Trans. Intell. Transp. Syst.}, 2018.

\bibitem[Sandler et~al.(2018)Sandler, Howard, Zhu, Zhmoginov, and
  Chen]{DBLP:conf/cvpr/SandlerHZZC18}
Mark Sandler, Andrew~G. Howard, Menglong Zhu, Andrey Zhmoginov, and
  Liang{-}Chieh Chen.
\newblock {MobileNetV2: Inverted Residuals and Linear Bottlenecks}.
\newblock In \emph{{CVPR}}, 2018.

\bibitem[Srivastava et~al.(2014)Srivastava, Hinton, Krizhevsky, Sutskever, and
  Salakhutdinov]{DBLP:journals/jmlr/SrivastavaHKSS14}
Nitish Srivastava, Geoffrey~E. Hinton, Alex Krizhevsky, Ilya Sutskever, and
  Ruslan Salakhutdinov.
\newblock {Dropout: a simple way to prevent neural networks from overfitting}.
\newblock \emph{J. Mach. Learn. Res.}, 2014.

\bibitem[Szegedy et~al.(2016)Szegedy, Vanhoucke, Ioffe, Shlens, and
  Wojna]{DBLP:conf/cvpr/SzegedyVISW16}
Christian Szegedy, Vincent Vanhoucke, Sergey Ioffe, Jonathon Shlens, and
  Zbigniew Wojna.
\newblock {Rethinking the Inception Architecture for Computer Vision}.
\newblock In \emph{{CVPR}}, 2016.

\bibitem[Szegedy et~al.(2017)Szegedy, Ioffe, Vanhoucke, and
  Alemi]{DBLP:conf/aaai/SzegedyIVA17}
Christian Szegedy, Sergey Ioffe, Vincent Vanhoucke, and Alexander~A. Alemi.
\newblock {Inception-v4, Inception-ResNet and the Impact of Residual
  Connections on Learning}.
\newblock In \emph{{AAAI}}, 2017.

\bibitem[Tan and Le(2019)]{DBLP:conf/icml/TanL19}
Mingxing Tan and Quoc~V. Le.
\newblock {EfficientNet: Rethinking Model Scaling for Convolutional Neural
  Networks}.
\newblock In \emph{{ICML}}, 2019.

\bibitem[Tan and Le(2021)]{DBLP:conf/icml/TanL21}
Mingxing Tan and Quoc~V. Le.
\newblock {EfficientNetV2: Smaller Models and Faster Training}.
\newblock In \emph{{ICML}}, 2021.

\bibitem[Tan et~al.(2019)Tan, Chen, Pang, Vasudevan, Sandler, Howard, and
  Le]{DBLP:conf/cvpr/TanCPVSHL19}
Mingxing Tan, Bo~Chen, Ruoming Pang, Vijay Vasudevan, Mark Sandler, Andrew
  Howard, and Quoc~V. Le.
\newblock {MnasNet: Platform-Aware Neural Architecture Search for Mobile}.
\newblock In \emph{{CVPR}}, 2019.

\bibitem[Yan et~al.(2018)Yan, Xiong, and Lin]{DBLP:conf/aaai/YanXL18}
Sijie Yan, Yuanjun Xiong, and Dahua Lin.
\newblock {Spatial Temporal Graph Convolutional Networks for Skeleton-Based
  Action Recognition}.
\newblock In \emph{{AAAI}}, 2018.

\bibitem[Zagoruyko and Komodakis(2016)]{DBLP:conf/bmvc/ZagoruykoK16}
Sergey Zagoruyko and Nikos Komodakis.
\newblock {Wide Residual Networks}.
\newblock In \emph{{BMVC}}, 2016.

\bibitem[Zhan et~al.(2020)Zhan, Xie, Liu, Ong, and
  Loy]{DBLP:conf/cvpr/ZhanX0OL20}
Xiaohang Zhan, Jiahao Xie, Ziwei Liu, Yew{-}Soon Ong, and Chen~Change Loy.
\newblock {Online Deep Clustering for Unsupervised Representation Learning}.
\newblock In \emph{{CVPR}}, 2020.

\bibitem[Zoph et~al.(2018)Zoph, Vasudevan, Shlens, and
  Le]{DBLP:conf/cvpr/ZophVSL18}
Barret Zoph, Vijay Vasudevan, Jonathon Shlens, and Quoc~V. Le.
\newblock {Learning Transferable Architectures for Scalable Image Recognition}.
\newblock In \emph{{CVPR}}, 2018.

\end{thebibliography}
\end{document}